\documentclass[conference]{IEEEtran}
\pagestyle{plain}
\usepackage{subfigure}
\usepackage{graphicx}
\usepackage{amsmath}
\usepackage{bm}
\usepackage{url}
\usepackage{stmaryrd}
\usepackage{balance}
\usepackage{amssymb}
\usepackage{pgfplots}
\usepackage{pifont}
\usepackage{epsfig}
\usepackage{booktabs}
\usepackage{pgffor}
\usepackage{tikz}
\usepackage{multirow}
\usepackage{mathrsfs}
\usepackage{bbm}
\usepackage{helvet}
\usepackage{courier}
\usepackage{threeparttable}
\usepackage{algpseudocode} 
\usepackage{arydshln}
\usepackage{mathrsfs}
\usepackage{amsthm}
\usepackage[colorlinks,linkcolor=blue]{hyperref}
\makeatletter
\newcommand\xleftrightarrow[2][]{%
	\ext@arrow 9999{\longleftrightarrowfill@}{#1}{#2}}
\newcommand\longleftrightarrowfill@{%
	\arrowfill@\leftarrow\relbar\rightarrow}
\makeatother

\newcommand{\mb}{\mathbf}

\newtheorem{thm}{\textsc{Theorem}}
\newtheorem{defn}{\textsc{Definition}}

\newtheorem{prop}{Proposition}

\newcommand{\our}{\text{Ripple Walk}}
\newcommand{\ourt}{\text{Ripple Walk Training}}
\newcommand{\ourshort}{\text{RWT}}
\newcommand{\oursampler}{\text{Ripple Walk Sampler}}
\newcommand{\gcn}{\text{GCN}}
\newcommand{\gat}{\text{GAT}}
\newcommand{\gnn}{\text{GNNs}}
\newcommand{\adam}{\textsc{ADAM}}

\usepackage{algorithm}

\pagenumbering{gobble}

\begin{document}

 


\title{Ripple Walk Training: A Subgraph-based Training Framework for Large and Deep Graph Neural Network}





\author{\IEEEauthorblockN{Jiyang Bai\textsuperscript{\textsection},
		Yuxiang Ren\textsuperscript{\textsection} and Jiawei Zhang}
	\IEEEauthorblockA{Department of Computer Science, Florida State University, FL, USA\\
		Email: bai@cs.fsu.edu,
		yuxiang@ifmlab.org,
		jiawei@ifmlab.org}}

\maketitle
\begingroup\renewcommand\thefootnote{\textsection}
\footnotetext{Equal contribution}
\endgroup

\begin{abstract}
	Graph neural networks (GNNs) have achieved outstanding performance in learning graph-structured data and various tasks. However, many current GNNs suffer from three common problems when facing large-size graphs or using a deeper structure: \textit{neighbors explosion}, \textit{node dependence}, and \textit{oversmoothing}. Such problems attribute to the data structures of the graph itself or the designing of the multi-layers {\gnn} framework, and can lead to low training efficiency and high space complexity.  To deal with these problems, in this paper, we propose a general subgraph-based training framework, namely \textbf{R}ipple \textbf{W}alk \textbf{T}raining ({\ourshort}), for deep and large graph neural networks. {\ourshort} samples subgraphs from the full graph to constitute a mini-batch, and the full GNN is updated based on the mini-batch gradient. 
	%
	We analyze the high-quality subgraphs to train GNNs in a theoretical way. 
	A novel sampling method {\oursampler} works for sampling these high-quality subgraphs to constitute the mini-batch, which considers both the randomness and connectivity of the graph-structured data. 
	Extensive experiments on different sizes of graphs demonstrate the effectiveness and efficiency of {\ourshort} in training various GNNs (GCN \& GAT). Our code is released in the https://github.com/anonymous2review/RippleWalk.
\end{abstract}

\section{Introduction}\label{sec:introduction}
Graph neural networks (GNNs) have achieved outstanding performance in graph-structured data based applications, such as knowledge graphs~\cite{WMWG17}, social medias~\cite{ren2020hgat}, and protein interface prediction~\cite{FBSB17}. GNNs learn nodes' high-level representations through a recursive neighborhood aggregation scheme~\cite{XHLJ10}. As the graph's scale increases and higher-order neighbors are considered, the recursive neighborhood aggregation can cause the number of neighbors to explode. We name this problem as \textit{neighbors explosion}. For each node, the \textit{neighbors explosion} will lead to the computation complexity exponentially increasing with the {\gnn} depth~\cite{chiang2019cluster} and the graph size. Therefore, some current works on GNNs (e.g., graph attention networks (GAT)~\cite{gat}) can only handle small-size graphs (normally less than 5000 nodes) with a shallow structure (less than 3 layers). Besides, the graph-structured data has the characteristics of  \textit{node dependence}, which means neighboring nodes affect each other in the learning process. As a result, the most current GNNs have to learn on the full graph, and when the size of the graph is too large, it is easy to reach the upper limit of the device memory. Even with a shallow {\gnn} structure, the memory space demand is extremely large. \textit{node dependence} also limits the performance of training methods such as mini-batch Stochastic Gradient Descent (SGD). Because calculating the loss of one node, {\gnn} require embeddings of all node neighbors, and its neighbors also need embeddings of their neighbors for aggregation. This increases the overhead of mini-batch SGD, especially for dense graphs and deeper GNNs. Another factor that limits the effectiveness of {\gnn} is \textit{oversmoothing}. Especially when {\gnn} go deeper and learn on the full graph, and it is unavoidable that node representations from different clusters mix up~\cite{zhao2019pairnorm}. But this aggregation is unexpected because nodes from different clusters do not meet the smoothness assumptions on the graph (close nodes are similar). Finally, the \textit{oversmoothing} will lead to node representations indistinguishable. Therefore, when the GNN has a deep structure, not only the training is more difficult, but \textit{oversmoothing} also impedes its performance.

To deal with the three problems mentioned above, some methods have emerged. GraphSAGE~\cite{hamilton2017inductive} learns a function that generates embeddings by sampling and aggregating features from a node's local neighborhood. FastGCN~\cite{chen2018fastgcn} utilizes Monte Carlo approaches to sample neighbors which avoids the \textit{neighbors explosion}. Chen et.al.~\cite{chen2017stochastic} develop control variate based algorithms that allow sampling an arbitrarily small neighbor size. They all use neighbor sampling to avoid \textit{neighbors explosion} and improve the training speed, but they can not handle the remaining problems. When the full graph's size is large, the memory overhead for learning on the full graph is unacceptable. These methods do not optimize the memory overhead when speed up training. Cluster-GCN~\cite{chiang2019cluster} has a training algorithm based on subgraphs, which are constructed by clustering on the full graph. The subgraphs are selected randomly to constitute mini-batches to train the GCN. However, the size of clusters in a graph is difficult to control. When very large subgraphs are constructed based on the clustering results, Cluster-GCN lacks scalability and cannot tackle the \textit{neighbors explosion}. Besides, the time and space overhead of clustering on a large graph are also nonnegligible.  

In this paper, we propose a general subgraph-based training framework, namely \textbf{R}ipple \textbf{W}alk \textbf{T}raining ({\ourshort}), for deep and large graph neural networks. {\ourshort} aims to handle all aforementioned problems simultaneously. {\ourshort} is developed from the mini-batch training, but there exist apparent differences. Instead of sampling neighbors and training on the full graph, {\ourshort} samples subgraphs from the full graph to constitute a mini-batch. The full GNN is updated based on the mini-batch gradient. We design a novel sampling method {\oursampler} for {\ourshort}, which considers both the randomness and connectivity of the graph-structured data to deal with those three problems. {\ourshort} can sample high-quality subgraphs to constitute the mini-batch to benefit efficient training.
For the problem of \textit{neighbors explosion}, the mini-batch gradient is calculated within subgraphs so that subgraphs of acceptable size can completely avoid this problem. At the same time, the gradient does not depend on nodes outside the subgraph, which solves the \textit{node dependence} at the subgraph level. Unexpected aggregations usually occur between subgraphs. Yet, the propagation-aggregation happens within the subgraph, so the \textit{oversmoothing} can be handled.

The contributions of our work are summarized as follows:
\begin{itemize}
	\item We propose a general subgraph-based training framework {\ourt} ({\ourshort}) for GNNs. {\ourshort} not only accelerates the training speed on the large graph but also breaks through the memory bottleneck. In addition, it can effectively deal with the problem of the \textit{oversmoothing} that occurs in deep GNNs.
	\item We analyze what kind of subgraphs can support effective and efficient training. Based on the analysis, we design a novel sampling method {\oursampler} with the theoretical guarantee. 
	\item We conduct extensive experiments on different sizes of graphs to demonstrate the effectiveness of {\ourshort}. The results show the superiority of {\our} in training different GNNs (GCN \& GAT) subject to the performance and training efficiency.
\end{itemize}

\section{Related Works} \label{sec:relatedwork}

Graph neural networks ({\gnn}) aim at the machine learning tasks involving graph-structured data. The first research work extending the convolutional neural network to the graph-structured data is~\cite{sperduti1997supervised}. After that, more related works~\cite{scarselli2008graph, micheli2009neural} were introduced. More recently, \cite{bruna2013spectral} is proposed and based on spectral graph theory. Later, spatial-based ConvGNNs~\cite{gat,monti2017geometric} define graph convolutions directly based on a node's spatial relations. The spectral-based and spatial-based {\gnn} can be regarded as an information propagation-aggregation mechanism, and such mechanism is achieved by the connections and multi-layer structure. Although {\gnn} have outstanding performance, they are also limited by the problems from three aspects: \textit{node dependence}, \textit{neighbors explosion}, and \textit{oversmoothing}. Aiming at these problems, some related works have been proposed in different directions.

\textit{Node dependence}~\cite{chiang2019cluster} forces GNNs to be trained on the entire graph, which leads to the slow training process. More specifically, in each training epoch, the information aggregation involves the full graph's adjacency matrix. Thus the space complexity will be at least $O(|{\cal V}|^2)$, where $|{\cal V}|$ is the size (number of nodes) of a full graph. To deal with such a problem, \cite{chiang2019cluster, zeng2019graphsaint} apply the concept of subgraph training methods. The essence of subgraph training is to collect a batch of subgraphs from the full graph and use them during the training process. 
There are also other approaches to optimize the {\gnn} frameworks. \cite{kipf2016semi,levie2018cayleynets,liao2019lanczosnet} optimize the localized filter to reduce the time cost of training on the full graph. Further, \cite{henaff2015deep,li2018adaptive} reduce the number of learnable parameters by dimensionality reduction and residual graph Laplacian, respectively. But these approaches do not alleviate the space complexity problem.  

\textit{Neighbors expansion} makes deep GNNs difficult to being implemented. Because learning a single node requires embeddings from its neighbors, and the quantity may be explosive when a GNN goes deeper. 
Some research works deal with \textit{neighbors explosion} by neighbors sampling~\cite{hamilton2017inductive,chen2018fastgcn,chen2017stochastic}. 
In other directions, several models~\cite{gao2018large,xu2018representation} select specific neighbors based on defined metrics to avoid the explosive quantity. 
\cite{rong2019dropedge} randomly remove edges from input graphs to handle the \textit{neighbor explosion}. The works mentioned above all focus on the neighbor-level sampling but still have the same space complexity with original {\gnn}.

The problem of \textit{oversmoothing} in the GCN was introduced in~\cite{li2018deeper}. When GNNs go deep, the performance suffers from \textit{oversmoothing}, where node representations from different clusters become mixed up~\cite{zhao2019pairnorm}. The node information propagation-aggregation mechanism, can be regarded as one type of random walk within the graph. With the increasing of walking steps, the representations of nodes will finally converge to a stable status. Such convergence would impede the performance of {\gnn} and make the nodes indistinguishable in the downstream tasks. Some related works have been proposed to deal with the \textit{oversmoothing}. \cite{gresnet} comes up with the suspended animation and utilizes the residual networks to mine the advantages of deeper networks.

\section{Proposed Algorithm}\label{sec:method} 
%

\subsection{Preliminaries and Background}
For most widely used {\gnn} models (e.g., {\gcn}, {\gat}), the essence of which is aggregating feature representation for each node in the full graph, and then using the aggregated feature representation to accomplish specific tasks. Given a graph $\cal G = (\cal V, \cal E)$, the aggregation procedure of {\gnn} layers is shown as follow:
\begin{equation}\label{equ:aggre}
\footnotesize
\begin{split}
&\mb{h}^{(0)} = \mb{X}\\
&\mb{h}^{(l+1)}[i] = \sigma(\sum_{j\in {\cal N}_i} \mb{\alpha}_{ij}\cdot \mb{h}^{(l)}[j]\mb{W}^{(l)})
\end{split}
\vspace{-5pt}
\end{equation}
Here, the $\mb{X}\in \mathbb{R}^{|{\cal V}|\times F_0}$ is the input feature vectors (matrix) of all the nodes in graph $\cal G$; $\mb{h}^{(l)}[i]$ is the hidden feature of node $i$ in the $l_{th}$ layer; $\sigma$ is the non-linear function such as Relu~\cite{relu}; $\mb{W}^{(l)}$ is the learnable linear transfer matrix; $\mb{\alpha}$ is a variant of adjacency matrix, which represents different meanings according to different {\gnn} models. For example, in {\gcn} structure, $\mb{\alpha} = \widetilde{\mb{A}}$ is the normalized adjacency matrix. During the feedforward process, the hidden representations of node $i$ are updated by aggregating both its features and the local neighbors' hidden features. After layers of computing, the output representations of nodes will be delivered to the downstream tasks. The learnable weights will be optimized during the backpropagation.

The calculation in Equation~\ref{equ:aggre} uses the full graph of $\cal G$, and the full graph involved will easily lead to the concern of \textit{node dependence} and \textit{neighbors explosion}.
Since it requires the full adjacency matrix $\mb{A}$ and entire feature matrix $\mb{X}$, the size of which would be too large to deal with. 
Both the increasing size of the graph (e.g., graph with millions of nodes~\cite{chiang2019cluster}) and more sophisticated models (e.g., deeper layers~\cite{gresnet}) would aggravate the problems.


\subsection{Subgraph-based Training}\label{subsec:gnn_with_subgraph}
\begin{algorithm}[tb]
	\scriptsize
	\caption{{\ourt} for {\gnn}} 
	\begin{algorithmic}[1]
		\Require
		Graph $\cal G$; {\gnn} model $H_{\mb{W}}(\cdot)$; loss function $Loss(\cdot)$; training iteration number $T$; subgraph mini-batch size $M$
		\Ensure
		Trained $H_{\mb{W}}(\cdot)$
		\State Initialize subgraph mini-batch $batch = \{\}$
		\For {$k = 1,2,\dots , M$}
		\State ${\cal G}_k \gets {\oursampler} $\enspace/* By Algorithm~\ref{alg:ripple_walk} */
		\State $batch = batch \cup \{{\cal G}_k\}$
		\EndFor
		\For {$t = 1,2,\dots , T$}
		\State Select a subgraph from $batch$ as ${\cal G}_t$
		\vspace{0.02in}
		\State $loss = Loss( H_{\mb{W}}({\cal G}_t), \mb{y}_{{\cal G}_t})$ \enspace /* The $\mb{y}_{{\cal G}_t}$ denotes the ground truth of nodes in ${\cal G}_t$. */
		\vspace{0.02in}
		\State Update $\mb{W}$ according to the gradient $\nabla_{\mb{W}} loss$
		\EndFor 
		
		\State \Return $H_{\mb{W}}(\cdot)$
	\end{algorithmic}\label{alg:gnn_subgraph}
	
\end{algorithm}

To solve the problem of computationally expensive, an alternative approach is training the {\gnn} with {\ourshort}. The procedure of {\ourshort} is presented in Algorithm~\ref{alg:gnn_subgraph}. In Algorithm~\ref{alg:gnn_subgraph}, the subgraph mini-batch size $M$ are varying for different datasets. Generally, the value of $M$ is to satisfy that the overall number of sampled nodes (in all subgraphs) is about ten times the full graph size. Unlike the training process involving the full graph $\cal G$, {\ourshort} employs a subgraph of $\cal G$ in each training iteration. In other words, a smaller size of $\alpha$ matrix and only part of the nodes are required during each training epoch. In this way, the aggregation procedure in the $t_{th}$ training iteration is
\begin{equation}\label{equ:aggre_sub}
\vspace{-10pt}
\footnotesize
\begin{split}
&\mb{h}^{(0)} = \mb{X}_{{\cal G}_t}\\
&\mb{h}^{(l+1)}[i] = \sigma(\sum_{j\in {\cal N}^t_i} \mb{\alpha}^t_{ij}\cdot \mb{h}^{(l)}[j]\mb{W})
\end{split}
\vspace{-10pt}
\end{equation}
Here, the ${\cal G}_t = ({\cal V}_t, {\cal E}_t)$ is a subgraph of $\cal G$, where ${\cal V}_t\subseteq \cal V$ and ${\cal E}_t \subseteq \cal E$; ${\cal N}^t_i$ is the neighbor nodes set of node $i$ in ${\cal G}^t$; $\mb{\alpha}^t$ corresponds to the adjacency matrix of ${\cal G}_t$. For different training iterations, different subgraphs will be employed into Equation~\ref{equ:aggre_sub}. Comparing to the Equation~\ref{equ:aggre}, the computational complexity in Equation~\ref{equ:aggre_sub} can be reduced from $ O(|{\cal N}||{\cal V}|)$ to $O(|{\cal N}^t||{\cal V}_t|)$.

The switch from Equation~\ref{equ:aggre} to Equation~\ref{equ:aggre_sub} is similar to the change from gradient descent to mini-batch gradient descent. For {\ourshort}, the concerns are also reflected in two aspects: (1) each subgraph only contains part of the nodes; (2) subgraph is equivalent to dropping some edges, which means the dependency (connections) of nodes is incomplete. Unlike previous data type (e.g., image data), where each data sample is independent, the graph type data consists of tons of nodes connected to others.
To respond to these concerns and prove the effectiveness of {\gnn} models with subgraphs, we propose the following theorems.
\begin{thm}\label{thm:1}
	Given graph $\cal G = (\cal V,\cal E)$, assume the ${\cal V}'\subseteq \cal V$ and the nodes in ${\cal V}'$ are randomly sampled from $\cal V$; $H$ is a {\gnn} structure. The objective fucntion of training $H$ with subset nodes (${\cal V}'$) (with all neighbors) is equivalent to the objective fucntion of training with full graph, which can be presented as:
	\begin{equation}\label{equ:loss_func}
	\footnotesize \begin{split}
	\min_{H} \frac{1}{|\cal V|}\sum_{i\in \cal V} loss(H({\cal G}(i)), \mb{y}_i) \doteq \min_{H} \frac{1}{|{\cal V}'|}\sum_{j\in {\cal V}'} loss(H({\cal G}(j)), \mb{y}_j)
	\end{split}
	\end{equation}	
	where $\doteq$ denotes unbiased estimation; $loss(\cdot)$ is the selected loss function; ${\cal G}(i)$ means using node $i$'s neighbors in $\cal G$ (all neighbors) during neighbors aggregation. 
\end{thm}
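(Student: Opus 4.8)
The plan is to treat the right-hand side as a random variable induced by the random draw of $\mathcal{V}'$ and to show that, for any fixed GNN $H$, its expectation equals the left-hand (full-graph) objective; by the definition of $\doteq$ this establishes the claimed unbiasedness. The crucial structural observation is that, because $\mathcal{G}(i)$ retains \emph{all} of node $i$'s neighbors in $\mathcal{G}$, the per-node term $\ell_i(H) = loss(H(\mathcal{G}(i)), \mathbf{y}_i)$ is exactly the same function of $H$ whether node $i$ is evaluated inside the full sum or inside the subset sum. Restricting to $\mathcal{V}'$ changes \emph{which} per-node terms are averaged, but never changes the \emph{value} of any individual term — this is precisely the property that would fail if neighbors were dropped, and it is what reduces the statement to an elementary sampling fact.

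Concretely, first I would rewrite the two objectives as averages of the fixed quantities $\{\ell_i(H)\}_{i\in\mathcal{V}}$: the full objective is $\frac{1}{|\mathcal{V}|}\sum_{i\in\mathcal{V}}\ell_i(H)$, while the subset objective is $\frac{1}{|\mathcal{V}'|}\sum_{j\in\mathcal{V}'}\ell_j(H)$. Next I would fix the sampling scheme (uniform simple random sampling of $\mathcal{V}'$ from $\mathcal{V}$) and compute the expectation of the subset objective over this draw. Expressing the subset sum with inclusion indicators $\mathbbm{1}[i\in\mathcal{V}']$ and using that each node is included with equal probability $|\mathcal{V}'|/|\mathcal{V}|$, linearity of expectation yields $\mathbb{E}\big[\frac{1}{|\mathcal{V}'|}\sum_{j\in\mathcal{V}'}\ell_j(H)\big] = \frac{1}{|\mathcal{V}|}\sum_{i\in\mathcal{V}}\ell_i(H)$, i.e.\ the sample mean is unbiased for the population mean. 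The identical conclusion follows if $\mathcal{V}'$ is drawn with replacement, since each independent draw then already has expectation equal to the population mean.

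The main obstacle is conceptual rather than computational: I must be careful about what ``unbiased estimation'' of a \emph{minimization} problem means, since $\arg\min$ does not commute with expectation. The interpretation that the all-neighbors assumption makes rigorous is that the equivalence holds at the level of the expected objective value for each fixed $H$, so the stochastic objective on $\mathcal{V}'$ is an unbiased surrogate whose expected gradient (by the same term-by-term linearity argument) matches the full-graph gradient — exactly the property needed to justify the mini-batch style updates in Algorithm~\ref{alg:gnn_subgraph}. I would close by emphasizing the standing hypothesis that \emph{all} neighbors are retained; the distinct and harder question of the error incurred when neighbors are themselves sampled is what the subsequent analysis must quantify.
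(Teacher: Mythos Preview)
Your proposal is correct and follows essentially the same approach as the paper: both arguments reduce the claim to the elementary fact that, under uniform random sampling of $\mathcal{V}'$ from $\mathcal{V}$, the sample mean $\frac{1}{|\mathcal{V}'|}\sum_{j\in\mathcal{V}'}\ell_j(H)$ is an unbiased estimator of the population mean $\frac{1}{|\mathcal{V}|}\sum_{i\in\mathcal{V}}\ell_i(H)$. Your version is in fact more explicit (using inclusion indicators and linearity of expectation) and more careful about the ``all neighbors'' hypothesis and the $\arg\min$ subtlety than the paper's own proof, which simply rewrites the full loss as an expectation and appeals to a statistical-learning reference for unbiasedness.
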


\begin{proof}
	Similar to the switch from gradient descent (GD) to stochastic gradient descent (SGD), where the gradient calculated in SGD is an estimation of that in GD, the proof of Theorem~\ref{thm:1} also follows the same rule.	For the loss fucntion with full graph $\cal G$ (left part of Equation~\ref{equ:loss_func}), it has
	\begin{equation}
	\footnotesize\begin{split}
	\frac{1}{|\cal V|}\sum_{i\in \cal V} loss(H({\cal G}(i)), \mb{y}_i)
	&= \frac{1}{|\cal V|} |{\cal V}|\cdot \mathbb{E}_{i\in \cal V}[loss(H({\cal G}(i)), \mb{y}_i)]\\
	&= \mathbb{E}_{i\in \cal V} [loss(H({\cal G}(i)), \mb{y}_i)]
	\end{split}
	\end{equation}
	In the above equation, the loss function is expressed as the expectation format. Let us denote the loss function with ${\cal V}'$ (right part of Equation~\ref{equ:loss_func}) as $\mathcal{L}'$, $\mathcal{L}' =  \frac{1}{|{\cal V}'|}\sum_{j\in {\cal V}'} loss(H({\cal G}(j)), \mb{y}_j)$.	Since the nodes in ${\cal V}'$ are randomly sampled from $\cal V$, according to the statistical leanrning~\cite{annal} the $\mathcal{L}'$ is an unbiased estimation of $\mathbb{E}_{i\in \cal V} [loss(H({\cal G}(i)), \mb{y}_i)]$. Thus, we have $\mathbb{E}_{i\in \cal V} [loss(H({\cal G}(i)), \mb{y}_i)]
	\doteq \mathcal{L}'$, which also means
	\begin{equation*}
	\footnotesize\begin{split}
	\min_{H} \frac{1}{|{\cal V}|}\sum_{j\in {\cal V}} loss(H({\cal G}_(j)), \mb{y}_j) \doteq \min_{H} \frac{1}{|{\cal V}'|}\sum_{i\in {\cal V}'} loss(H({\cal G}(i)), \mb{y}_i)
	\end{split}
	\end{equation*}
\end{proof}

\vspace{-10pt}
\begin{thm}
	Under the settings in Theorem~\ref{thm:1}, the objective function of training $H$ with subgraph ${\cal G}' = ({\cal V}', {\cal E}')$ is equivalent to training with subset nodes ${\cal V}'$ (with all neighbors), which can be represented as
	\begin{equation}\label{equ:thm2_conclusion}
	\footnotesize \begin{split}
	\min_{H} \frac{1}{|{\cal V}'|}\sum_{i\in {\cal V}'} loss(H({\cal G}(i)), \mb{y}_i) \doteq \min_{H} \frac{1}{|{\cal V}'|}\sum_{j\in {\cal V}'} loss(H({\cal G}'(j)), \mb{y}_j)
	\end{split}
	\end{equation}
	where ${\cal G}'(j)$ means using node $j$'s neighbor in ${\cal G}'$ (partial neighbors).
\end{thm}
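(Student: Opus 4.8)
The plan is to reduce the asserted equivalence to a single unbiasedness statement about the neighbour aggregation at one node, and then lift it to the averaged loss exactly as in the proof of Theorem~\ref{thm:1}. Observe that the two objectives in Equation~\ref{equ:thm2_conclusion} are summed over the \emph{same} index set ${\cal V}'$ with the \emph{same} labels $\mb{y}_j$; they differ only in the neighbour set supplied to $H$ at each node, i.e. the full neighbourhood ${\cal N}_i$ used by ${\cal G}(i)$ versus the restricted neighbourhood ${\cal N}_i' = {\cal N}_i \cap {\cal V}'$ used by ${\cal G}'(i)$. It therefore suffices to show that, for a fixed $i\in {\cal V}'$, the subgraph output $H({\cal G}'(i))$ is an unbiased estimate of the full output $H({\cal G}(i))$; the outer average over ${\cal V}'$ then carries the estimate through by linearity of expectation, yielding $\doteq$.

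First I would invoke the sampling hypothesis inherited from Theorem~\ref{thm:1}: the nodes of ${\cal V}'$ are drawn uniformly at random from $\cal V$. Hence, for each $i$, the surviving neighbours ${\cal N}_i' = {\cal N}_i \cap {\cal V}'$ constitute a uniform random subsample of ${\cal N}_i$. Reading the aggregation of Equation~\ref{equ:aggre_sub} as a (re-normalised) average over the available neighbours, the core identity I would establish is the neighbour-level analogue of the node-level expectation used in Theorem~\ref{thm:1}:
\begin{equation*}
\footnotesize
\mathbb{E}\Big[\frac{1}{|{\cal N}_i'|}\sum_{j\in {\cal N}_i'}\mb{h}^{(l)}[j]\Big] = \frac{1}{|{\cal N}_i|}\sum_{j\in {\cal N}_i}\mb{h}^{(l)}[j],
\end{equation*}
that is, the subgraph sample mean over neighbours is unbiased for the full-graph population mean. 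Because the weights $\mb{\alpha}$ re-normalise on ${\cal G}'$, this average is exactly the quantity the restricted aggregation computes.

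Next I would propagate this unbiasedness. Granting that a single aggregation layer is unbiased in the above sense, a layer-by-layer induction shows that $\mb{h}^{(l)}$ computed on ${\cal G}'$ estimates $\mb{h}^{(l)}$ computed on $\cal G$, hence the final output $H({\cal G}'(i))$ estimates $H({\cal G}(i))$. Substituting into $loss(\cdot)$ and averaging over $i\in {\cal V}'$ then reproduces the closing step of the previous proof and delivers the right-hand side of Equation~\ref{equ:thm2_conclusion}.

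The hard part will be the nonlinearity $\sigma$ together with the degree-dependent normalisation hidden in $\mb{\alpha}$. Since $\sigma$ is nonlinear, $\mathbb{E}[\sigma(X)] \neq \sigma(\mathbb{E}[X])$ in general, so unbiasedness of the pre-activation does not transfer verbatim to the activation, and any discrepancy can compound across layers; moreover, restricting to ${\cal G}'$ alters node degrees and therefore the normalised weights $\mb{\alpha}$ themselves, not merely the index set of the sum, so the clean sample-mean identity is only exact for genuinely self-normalising aggregations. I would address this by reading $\doteq$ in the paper's estimation sense --- matching in expectation at the pre-activation level --- and by pointing out that for degree-normalised means in {\gcn} and softmax attention in {\gat} the weighted neighbour average is precisely what a uniform subsample estimates without rescaling, which is the cleanest regime in which the displayed identity holds.
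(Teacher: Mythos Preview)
Your proposal is correct and follows essentially the same approach as the paper's proof: rewrite the per-node aggregation as an expectation over ${\cal N}_i$, use the uniform-sampling hypothesis from Theorem~\ref{thm:1} to argue that the sample mean over ${\cal N}_i' = {\cal N}_i\cap {\cal V}'$ is an unbiased estimator of that expectation, and conclude $H({\cal G}'(i)) \doteq H({\cal G}(i))$ termwise. The paper's argument is in fact terser than yours---it simply asserts the estimation identity (citing a prior work for the expectation form) and does not discuss the nonlinearity or re-normalisation caveats you flag.
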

\begin{proof}
	The only difference between these two objective functions is the neighbors of each node. Since only the subgraph ${\cal G}' = ({\cal V}', {\cal E}')$ are involved during the training process, for node $i \in {\cal V}'$, only part of its neighbors are also in ${\cal V}'$. In other words, ${\cal N}'_i \subseteq {\cal N}_i$, where ${\cal N}'_i$ is the neighbor set of node $i$ in subgraph ${\cal G}'$. According to~\cite{huang2018adaptive}, the feed-forward propagation of node $i$ can be expressed as
	\begin{equation}
	\footnotesize \begin{split}
	\mb{h}^{(l+1)}[i] = \sigma(\sum_{k\in {\cal N}_i} \mb{h}^{(l)}[k]\cdot\mb{W}^{(l)}) \doteq \sigma (|{\cal N}_i|\cdot \mathbb{E}_{k\in {\cal N}_i} [\mb{h}^{(l)}[k]]\cdot \mb{W}^{(l)})
	\end{split}
	\end{equation}
	The expectation $\mathbb{E}_{k\in {\cal N}_i} [\mb{h}^{(l)}[k]]$ in the above equation can be estimated by the 
	\begin{equation}\label{equ:thm2_esti}
	\vspace{-5pt}
	\footnotesize \mathbb{E}_{k\in {\cal N}_i} [\mb{h}^{(l)}[k]] \doteq \frac{1}{|{\cal N}'_i|} \sum_{k\in {\cal N}'_i} \mb{h}^{(l)}[k]
	\vspace{-5pt}
	\end{equation}
	if the nodes in ${\cal N}'_i$ are randomly selected from ${\cal N}_i$. Given node $k\in {\cal N}_i$, we denote the possibility that node $k$ will be selected into ${\cal V}'$ as $p(k|i)$. We know that $\forall k,h \in {\cal N}_i$, $\text{p}(k|i) = \text{p}(k) = \text{p}(h) = \text{p}(h|i)$ in every step. Thus the Equation~\ref{equ:thm2_esti} can be satisfied, and
	\begin{equation}
	\footnotesize H({\cal G}(i)) \doteq H({\cal G}'(i)), \; \forall i\in {\cal V}'
	\end{equation}
	Therefore, the Equation~\ref{equ:thm2_conclusion} can hold.
\end{proof}

From the analysis above, to achieve the equivalent training effect, the subgraphs should possess:
\begin{itemize}
	\item \textbf{randomness}: The randomness contains two aspects: (1) each node has the same probability to be selected; (2) for any node, its neighbors own the same probability to be selected. Randomness can help eliminate the \textit{neighbors explosion} problem.
	\item \textbf{connectivity}: The subgraph should preserve the connectivity in the full graph. The connectivity of each subgraph should be high enough to preserve the connectivity in the full graph. This corresponds to the \textit{node dependence} problem. 
\end{itemize}

In this way, even though each subgraph cannot singly cover all the nodes and structure information in $\cal G$, the batch of subgraphs can help achieve the same object as the full graph as long as each subgraph satisfies the randomness and connectivity characteristics. To follow these two characteristics, we propose the {\oursampler} algorithm.

\subsection{{\our} Subgraph Sampling}\label{subsec:ripple_walk}
\begin{algorithm}[tb]
	\scriptsize
	\caption{{\oursampler}} 
	\begin{algorithmic}[1] 
		\Require
		Target graph $\cal G = (\cal V, \cal E)$; expansion ratio $r$; target subgraph size $S$
		\Ensure
		Subgraph ${\cal G}_k$
		\State Initiate ${\cal G}_k = ({\cal V}_k, {\cal E}_k)$ with ${\cal V}_k = \phi$
		\State Randomly select the initial node $v_{s}$, add $v_{s}$ into the ${\cal G}_k$
		\While {$|{\cal V}_k| < S$}
		\State  $NS = \{n| (n, j)\!\in\!{\cal E}, j\!\in\!{\cal V}_k, n\!\in\!{\cal V}\setminus{\cal V}_k\}$\enspace /* Get neighbor nodes set of current ${\cal V}_k$ */
		\State Randomly select $r$ of nodes in $NS$, add them into the ${\cal V}_k$ 
		\EndWhile
		\State \Return ${\cal G}_k$
	\end{algorithmic}\label{alg:ripple_walk}
\end{algorithm}

\begin{figure*}[t]
	\vspace{-30pt}
	\centering
	\includegraphics[width=0.9\textwidth, height=0.17\textheight]{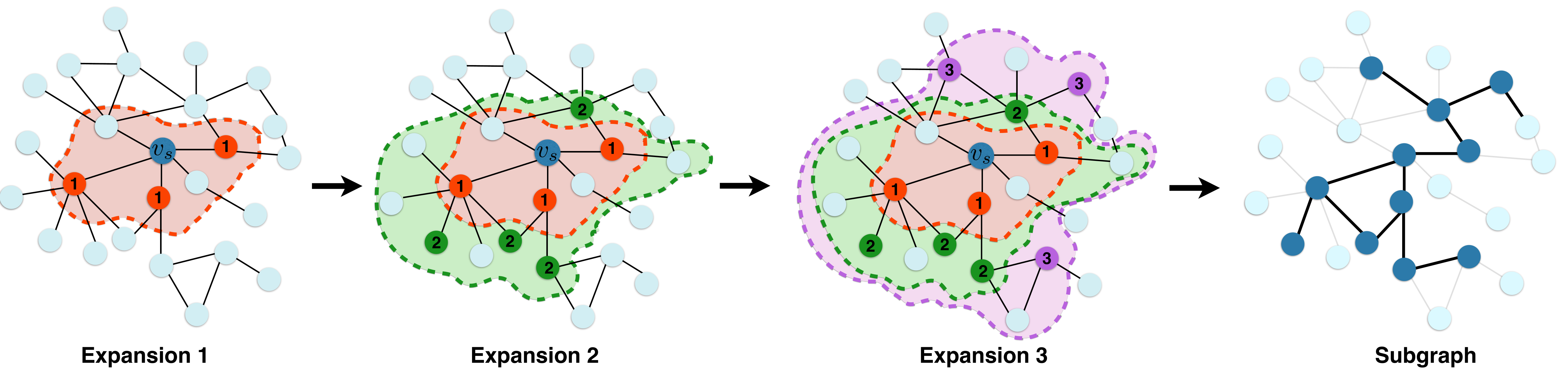}
	\vspace{-10pt}
	\caption{Ripple Walk Sampling. In each expansion, the expansion ratio $r = 0.5$, the background color region represents the neighbor set, the colored nodes represents the truly sampled nodes. (Best viewed in color)}
	\label{fig:ripple_walk}
	\vspace{-15pt}
\end{figure*}

The {\oursampler} algorithm is shown in Figure~\ref{fig:ripple_walk}. For the subgraph ${\cal G}_k$, it is initialized with a random node $v_{s}$, then expands along with the connections among nodes. After multiple expansion steps (sampling), the subgraph with a specific size (e.g., $S$) will be returned. During each expansion step, the neighbor set (shown by the background color region in Figure~\ref{fig:ripple_walk}) contains the potential nodes to be sampled. Then $r$ (e.g., $r = 0.5$) of the nodes (shown by colored nodes in Figure~\ref{fig:ripple_walk}) in neighbor set will be added into the current subgraph. Here, $r$ is the expansion ratio, which means the ratio of nodes in the neighbor set to be sampled in the current step. Such an expansion process operates like the ``ripple'' on the water. More details of {\oursampler} are exhibited in Algorithm~\ref{alg:ripple_walk}.

From the analysis in Section~\ref{subsec:gnn_with_subgraph}, we conclude that it is ideal if the sampled subgraphs possess both randomness and connectivity. The {\oursampler} strategy can maintain randomness by randomly sampling initial node and random expansion, while the expansion along edges can guarantee connectivity. In the following part, we will show the advantages of {\oursampler} algorithm concerning those two characteristics.

During the expansion of {\oursampler}, $r$ determines the range of the subgraph. When $r \rightarrow 0$, it can be regarded as random sampling. For the randomly sampled subgraph, the connectivity might be too low to reproduce the global structure in the full graph. To show the advantages of {\oursampler} compared with a random sample, we first state the following theorem:
\begin{thm}\label{thm:random}
	From graph $\cal G$, ${\cal G}_k = ({\cal V}_k, {\cal E}_k)$ is the subgraph generated by {\oursampler}, while ${\cal G}_r = ({\cal V}_r, {\cal E}_r)$ is the randomly sampled subgraph. Then $\forall i, j\in {\cal V}_r$ and $\forall m, l\in {\cal V}_k$, it has
	\begin{equation}
	\footnotesize \text{p}((i, j)\in {\cal E}_r)\leq\text{p}((m,l)\in {\cal E}_k)
	\end{equation}
	where $\text{p}(\cdot)$ denotes the probability.
\end{thm}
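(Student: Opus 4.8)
The plan is to reduce both sides of the inequality to a comparison of \emph{pair-inclusion probabilities} and then exploit the fact that the ripple-walk expansion in Algorithm~\ref{alg:ripple_walk} only ever admits a node when it is adjacent to the current subgraph, which induces a positive correlation between ``being an edge of $\mathcal{G}$'' and ``having both endpoints sampled.'' Concretely, since both samplers return a subgraph on exactly $S$ nodes, the number of candidate pairs is fixed at $\binom{S}{2}$, so for a uniformly chosen pair of sampled nodes the edge probability equals the expected edge density, $\text{p}(\text{edge}) = \mathbb{E}[|\mathcal{E}_\bullet|]/\binom{S}{2}$, and $\mathbb{E}[|\mathcal{E}_\bullet|] = \sum_{(u,v)\in\mathcal{E}} \text{p}(u,v\in\mathcal{V}_\bullet)$ by linearity of expectation. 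The whole theorem therefore comes down to comparing, for the two samplers, the average over edges of the probability that both endpoints are retained.

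First I would settle the random-sampling side exactly. Because $\mathcal{V}_r$ is a uniform $S$-subset of $\mathcal{V}$ chosen independently of the edge set, every pair of nodes---edge or non-edge---is retained with the identical probability $\frac{S(S-1)}{|\mathcal{V}|(|\mathcal{V}|-1)}$. Substituting this into the expression above collapses the random-sampling edge probability to the global density $\bar p = \frac{2|\mathcal{E}|}{|\mathcal{V}|(|\mathcal{V}|-1)}$, independent of which pair $(i,j)$ is examined; this is the left-hand quantity.

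It then remains to show that the ripple-walk sampler retains the two endpoints of an edge with probability at least the uniform baseline $\frac{S(S-1)}{|\mathcal{V}|(|\mathcal{V}|-1)}$ on average. Here I would use two structural features of Algorithm~\ref{alg:ripple_walk}: the seed is chosen uniformly, and every subsequently added node lies in the neighbor set $NS$ of the current $\mathcal{V}_k$, hence is adjacent to an already-sampled node. Since the sampler always returns exactly $S$ nodes, the marginal inclusion probabilities still sum to $S$ and average to $S/|\mathcal{V}|$ as under random sampling; the edge-guided rule redistributes this mass toward well-connected nodes and, crucially, boosts the \emph{joint} inclusion of adjacent pairs: conditioned on $u\in\mathcal{V}_k$, its neighbor $v$ becomes an eligible expansion candidate, so $\text{p}(v\in\mathcal{V}_k\mid u\in\mathcal{V}_k)$ exceeds the unconditional $\text{p}(v\in\mathcal{V}_k)$, whereas for a non-adjacent pair no direct boost occurs. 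Averaging the resulting positive correlation over all edges yields the required inequality, and as a sanity check the connectivity of $\mathcal{G}_k$ alone already forces $|\mathcal{E}_k|\geq S-1$, i.e. density at least $\tfrac{2}{S}$, which dominates $\bar p$ whenever $\tfrac{2}{S}\ge \bar p$, i.e. in the sparse regime typical of real graphs.

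The main obstacle I anticipate is making the positive-correlation step fully rigorous: the ripple-walk inclusion events are governed by a complicated history-dependent process (the neighbor set, and hence each acceptance probability, evolves with every expansion), so $\text{p}(u,v\in\mathcal{V}_k)$ has no closed form and the correlation must be argued structurally rather than computed. I would try to formalize it by a coupling that runs the ripple walk and a uniform sampler from the same seed and exposes candidate nodes in a shared order, showing that an adjacent pair is jointly accepted under the edge-guided rule whenever it is under the uniform rule (and sometimes strictly more often), so that the per-edge joint-inclusion probability is never decreased by switching from uniform to edge-guided expansion; summing over edges then gives the claim without leaning on the sparse-graph assumption.
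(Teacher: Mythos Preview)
Your proposal is sound in its reduction to expected edge density, but it takes a considerably more elaborate route than the paper. The paper's proof is a two-line counting argument: by construction of Algorithm~\ref{alg:ripple_walk}, every node added to $\mathcal{V}_k$ lies in the neighbor set $NS$ and is therefore adjacent to some node already in $\mathcal{V}_k$, so each expansion step contributes at least one new edge to $\mathcal{E}_k$; under uniform random sampling a newly added node may bring no new edge at all; hence for subgraphs of the same size, $|\mathcal{E}_k|$ dominates $|\mathcal{E}_r|$ and the inequality follows. In other words, what you treat as a ``sanity check''---the connectedness bound $|\mathcal{E}_k|\ge S-1$---\emph{is} the paper's entire argument, and the positive-correlation and coupling machinery you develop is not invoked. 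What your approach buys is rigor and scope: you correctly flag that the $S-1$ lower bound only beats the uniform baseline $\bar p$ in the sparse regime $2/S\ge \bar p$, a caveat the paper's proof does not address, and your proposed coupling would (if carried through) remove that restriction. The trade-off is that the coupling step you identify as the main obstacle is genuinely delicate, whereas the paper sidesteps it by accepting the weaker edge-counting comparison as sufficient.
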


\begin{proof}
	According to Algorithm~\ref{alg:ripple_walk}, in each sampling step, for $\forall i\in NS$, there $\exists j\in{\cal V}_k$ having $(i, j)\in {\cal E}$. Thus when {\our} adds one node into the subgraph, one edge will be added into ${\cal E}_k$ as well. For ${\cal G}_r$, when a new node is selected into the subgraph, possibly there is no new edge added. For subgraphs with the same number of nodes, more connections will selected into ${\cal E}_k$ comparing to ${\cal E}_r$. Therefore, we have $\text{p}((i, j)\in {\cal E}_r)\leq\text{p}((m,l)\in {\cal E}_k)$.
\end{proof}
\noindent From Theorem~\ref{thm:random}, it is obvious that {\our} can join more connections during the sampling process. Thus the connectivity of subgraphs by {\oursampler} is higher than the randomly sampled subgraphs. 

Similar to the {\our}, Breadth-First-Search (BFS) is a graph search algorithm that expands from one central node and traverses the whole neighbor set. Essentially, BFS is equivalent to {\oursampler} with $r\rightarrow 1$. Different from Ripple Walk Sampler, BFS cannot guarantee the randomness of node sampling: for BFS, once the initial node $v_s$ and the target subgraph size $S$ are certain, the nodes to be selected into the subgraph have been determined. 
In fact, if BFS satisfies the randomness mentioned in Section~\ref{subsec:gnn_with_subgraph}, the subgraph cannot be determined by the initial node. On the other hand, {\oursampler} can maintain randomness. Except for the random initial node, the neighbor nodes in each step are sampled randomly. Even starting from the same initial node, {\oursampler} can still generate different subgraphs. 

From the analysis above, {\oursampler} not only can keep the randomness of the sampled subgraphs, but also maintain a relatively high level of connectivity. With these two characteristics, {\ourshort} can solve the \textit{neighbors explosion} and \textit{node dependence} problems, meanwhile reproduce the information in the full graph. The selection of expansion ratio is important, and we provide the analyses of parameter $r$ in Section~\ref{sec:parameter_r}. 

\vspace{-5pt}
\subsection{For Deeper Graph Networks}
The commonly used {\gnn} only involve no more than two layers. According to~\cite{li2018deeper}, each {\gcn} layer can be regarded as one type of Laplacian smoothing, which essentially computes the features of nodes as the weighted average of itself and its neighbors'. In other words, GNN structures with much deeper layers will repeatedly carry out Laplacian smoothing, and features of nodes will finally converge to the global steady states. Such smoothing will undermine the learning ability of {\gnn}. This point of view also corresponds to the concepts of over smoothing and mixing time in~\cite{rong2019dropedge,lovasz1993random}.

By applying {\gnn} with subgraphs, we will prove that {\ourshort} can eliminate the problem of converging to global steady states. Subsequently, {\gnn} with deeper layers can achieve better learning capability. We will give the following definition and assumption. 
\begin{defn}
	(Node distribution): In graph $\cal G = (\cal V, \cal E)$, $\mb{h}^{(0)}[i]\sim \mathcal{D}_i$ denotes that the feature representation of node  $i \in \cal V$ is under the distribution $\mathcal{D}_i$.
\end{defn}

In graph $\cal G$, each node is under a corresponding distribution. While different nodes might own different labels, we assume that nodes within the same class share similar distributions. The assumption can be presented as
\begin{prop}\label{lemma:kl1}
	In graph $\cal G = (\cal V, \cal E)$ with $i, j, k \in \cal V$, if $\mb{y}_i = \mb{y}_j$ and $\mb{y}_i \not= \mb{y}_k$, then we assume
	\begin{equation}\label{equ:kl1}
	\footnotesize \mathbb{E}_{\mb{y}_i = \mb{y}_j}[KL({\cal D}_i, {\cal D}_j)] \leq \mathbb{E}_{\mb{y}_i \not= \mb{y}_k}[KL({\cal D}_i, {\cal D}_k)]
	\end{equation}
	where $	\mathbb{E}$ is the expectation and $KL$ is the Kullback-Leibler divergence (KL divergence).
\end{prop}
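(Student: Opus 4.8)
The plan is to first recognize that Proposition~\ref{lemma:kl1} is posed as a modeling \emph{assumption} (``we assume''), so rather than deriving it in a vacuum I would ground it in an explicit class-conditional generative model for the node feature distributions and show that the inequality follows from that model together with the graph smoothness/homophily property (``close nodes are similar'') already invoked earlier in the paper. Concretely, I would posit that for each class label $c$ there is a prototype distribution ${\cal P}_c$, and that each node's distribution ${\cal D}_i$ is a bounded perturbation of the prototype of its own label, i.e. ${\cal D}_i$ lies within some radius $\epsilon$ of ${\cal P}_{\mb{y}_i}$ (measured in a proper metric such as total variation), while distinct prototypes ${\cal P}_c$ and ${\cal P}_{c'}$ are separated by a margin $\Delta > \epsilon$. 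This prototype-plus-perturbation model is precisely the formal content of the homophily assumption: same-label nodes are generated from nearly the same source, whereas distinct-label nodes are generated from well-separated sources.

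With the model in place, the argument splits into two symmetric bounds. First I would handle the same-class term: for $\mb{y}_i = \mb{y}_j$, both ${\cal D}_i$ and ${\cal D}_j$ are within $\epsilon$ of the common prototype ${\cal P}_{\mb{y}_i}$, so I would \emph{upper}-bound $\mathbb{E}_{\mb{y}_i = \mb{y}_j}[KL({\cal D}_i,{\cal D}_j)]$ by a quantity controlled solely by the within-class spread $\epsilon$. Next I would handle the different-class term: for $\mb{y}_i \neq \mb{y}_k$, the distributions ${\cal D}_i$ and ${\cal D}_k$ sit near the distinct prototypes ${\cal P}_{\mb{y}_i}$ and ${\cal P}_{\mb{y}_k}$, so I would \emph{lower}-bound $\mathbb{E}_{\mb{y}_i \neq \mb{y}_k}[KL({\cal D}_i,{\cal D}_k)]$ by the between-prototype separation $\Delta$ minus a perturbation slack. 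Equation~\ref{equ:kl1} then reduces to the separation condition $\Delta \geq f(\epsilon)$ for an appropriate $f$, i.e. the classes are better separated than their internal spread, which is exactly what the homophily assumption is asserting.

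The main obstacle is that $KL$ is neither symmetric nor subject to a triangle inequality, so the clean ``prototype $\pm$ perturbation'' decomposition cannot be chained through $KL$ directly; moreover the two bounds point in opposite directions (an upper bound on the same-class divergence, a lower bound on the different-class divergence). Pinsker's inequality $KL(P,Q) \geq 2\,TV(P,Q)^2$ supplies the needed \emph{lower} bound for the different-class term via the triangle inequality in total variation, but it does not give the required \emph{upper} bound on the same-class term, so a naive metric-chaining argument fails on that side. I would resolve this either by invoking a reverse Pinsker bound under a bounded-density-ratio condition, or, more cleanly, by restricting to a tractable family (e.g. Gaussian ${\cal D}_i$) in which $KL$ has a closed form in the means and covariances; there the same-class versus different-class comparison becomes an explicit inequality on the prototype parameters and both directions follow by direct computation. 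The remaining routine step is to fix the sampling law over node pairs so that the two expectations in Equation~\ref{equ:kl1} are well defined, after which the upper and lower bounds combine to give the claim.
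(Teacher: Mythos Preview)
Your proposal is correct in spirit but differs substantially from the paper in that the paper offers \emph{no proof at all}. Proposition~\ref{lemma:kl1} is stated purely as a modeling assumption; the only justification the authors give is the one-sentence remark that ``the same labeled nodes are more likely to share information (features) that comes from similar distributions.'' There is no generative model, no Pinsker-type bound, and no computation---the inequality is simply posited and then used downstream in Theorem~\ref{thm:deeper}.

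What you propose is therefore a genuine strengthening: you supply an explicit class-conditional prototype model under which the assumption becomes a provable consequence of a separation condition $\Delta \geq f(\epsilon)$. This is more rigorous than the paper's treatment and buys you an actual sufficient condition rather than a bare postulate. The technical wrinkle you flag (KL lacks a triangle inequality, so the upper bound on the same-class term needs either a reverse-Pinsker hypothesis or a parametric family) is real and would need to be resolved, but since the paper never attempts any of this, your proposal already goes well beyond what is required to match the original.
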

KL divergence is a measure of the difference between two probability distributions. To be simplified, here we call it the KL divergence of two nodes. 
It is easy to understand since the same labeled nodes are more likely to share information (features) that comes from similar distributions.

According to Equation~\ref{equ:aggre}, the computation in each {\gnn} layer is the weighted averaging of each node's neighbors. If we ignore the linear transform by $\mb{W}^{(0)}$, from the node distribution view, it can be written as
\begin{equation}
\footnotesize \mb{h}^{(1)}[i] = \sum_{k\in {\cal N}_i} \alpha_{ik} \mb{h}^{(0)}[k] \sim \text{Joint}({\cal D}_{k\in {\cal N}_i}) \triangleq {\cal D}^{(1)}_i
\end{equation}
where $\text{Joint}({\cal D}_{k\in {\cal N}_i})$ means the weighted average distribution of each ${\cal D}_k$, and we denote $\text{Joint}({\cal D}_{k\in {\cal N}_i})$ as ${\cal D}^{(1)}_i$. Through one layer of calculation, the new hidden representation of node $i$ will be under the $\text{Joint}({\cal D}_{k\in {\cal N}_i})$ distribution. After $l$ layers, we denote it as $\mb{h}^{(l)}[i] \sim {\cal D}^{(l)}_i$. 

\begin{thm}\label{thm:deeper}
	For full graph $\cal G = (\cal V, \cal E)$ and subgraph mini-batch  $\{{\cal G}_1, {\cal G}_2, \dots, {\cal G}_M\}$ generated by {\oursampler}. Assume the nodes within the local parts are more likely to share the same label. Let $i, j\in {\cal V}$ and $m, n\in {\cal V}_k$, ${\cal G}_k = ({\cal V}_k, {\cal E}_k)\in \{{\cal G}_1, {\cal G}_2, \dots, {\cal G}_M\}$. Then,
	\begin{equation}
	\footnotesize \mathbb{E}_{m, n\in {\cal V}_k}[KL({\cal D}_m, {\cal D}_n)] \leq \mathbb{E}_{i ,j\in \cal V}[KL({\cal D}_i, {\cal D}_j)]
	\end{equation}
\end{thm}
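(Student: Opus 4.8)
The plan is to prove the inequality by conditioning each side on whether a sampled pair of nodes shares the same label, turning both expectations into convex combinations of two label-conditioned quantities. Writing $p_{{\cal V}} = \text{p}(\mb{y}_i = \mb{y}_j)$ for the probability that a uniformly random pair $i,j\in{\cal V}$ carries the same label, and $p_{{\cal V}_k} = \text{p}(\mb{y}_m = \mb{y}_n)$ for the analogous probability inside ${\cal V}_k$, I would let $a$ and $b$ denote the expected KL divergence of a pair conditioned on the two nodes having \emph{equal} and \emph{unequal} labels respectively, and then apply the law of total expectation to obtain
\[
\mathbb{E}_{i,j\in {\cal V}}[KL({\cal D}_i, {\cal D}_j)] = p_{{\cal V}}\, a + (1 - p_{{\cal V}})\, b,
\]
and the corresponding expression $p_{{\cal V}_k}\,a + (1 - p_{{\cal V}_k})\,b$ for the subgraph. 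Consistent with the node-distribution view (Definition~1), I assume the conditional expectations are governed by the label relation alone, so the same $a$ and $b$ serve both sides.

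Next I would supply the two orderings that drive the argument. Proposition~\ref{lemma:kl1} gives the first, $a \leq b$: same-labeled nodes are drawn from more similar distributions, so their expected KL divergence is no larger than that of differently-labeled nodes. The second is the homophily-type hypothesis stated in the theorem, that nodes lying in a common local region are more likely to share a label. Because {\oursampler} grows each subgraph as a connected ``ripple'' around a random seed---and Theorem~\ref{thm:random} guarantees that these subgraphs are highly connected, hence spatially local---the pairs inside ${\cal V}_k$ are exactly such local pairs, so the hypothesis yields $p_{{\cal V}_k} \geq p_{{\cal V}}$.

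Finally I would subtract the two convex combinations and factor:
\[
\mathbb{E}_{m,n\in {\cal V}_k}[KL({\cal D}_m, {\cal D}_n)] - \mathbb{E}_{i,j\in {\cal V}}[KL({\cal D}_i, {\cal D}_j)] = (p_{{\cal V}_k} - p_{{\cal V}})(a - b),
\]
which is a product of a nonnegative factor $(p_{{\cal V}_k} - p_{{\cal V}} \geq 0)$ and a nonpositive factor $(a - b \leq 0)$, hence is at most zero, giving the claim. The main obstacle I anticipate is justifying that $a$ and $b$ are genuinely shared between the full graph and the subgraph: the subgraph is a locality-biased sample, so one must argue that conditioning on the label relation strips out this bias from the KL statistics, or otherwise carry subgraph-specific conditionals $a_k, b_k$ through the computation and bound their discrepancy from $a,b$. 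The companion delicate point is converting the purely qualitative ``local parts share labels'' hypothesis into the quantitative inequality $p_{{\cal V}_k} \geq p_{{\cal V}}$, which is where the connectivity guarantee of Theorem~\ref{thm:random} must be leveraged.
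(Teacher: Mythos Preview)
Your proposal is correct and follows essentially the same route as the paper: decompose each expectation via the law of total expectation conditioned on the event $\{\mb{y}_m=\mb{y}_n\}$, invoke Proposition~\ref{lemma:kl1} for $a\le b$, and use the locality assumption to get $p_{{\cal V}_k}\ge p_{{\cal V}}$. Your explicit factorization $(p_{{\cal V}_k}-p_{{\cal V}})(a-b)\le 0$ is a tidier presentation of the same computation, and the caveat you raise---that the conditional quantities $a,b$ must coincide between ${\cal V}$ and ${\cal V}_k$---is a point the paper simply assumes without comment, so you are being more careful than the original, not less.
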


\begin{proof}
	According to {\oursampler}, ${\cal G}_k$ only covers part of local nodes in $\cal G$. Thus for $\forall i, j \in {\cal V}$ and $\forall m,n \in {\cal V}_k$, $\text{p}(\mb{y}_i = \mb{y}_j) \leq \text{p}(\mb{y}_m = \mb{y}_n)$. Therefore,
	\begin{equation}
	\footnotesize \begin{split}
	&\mathbb{E}_{m, n\in {\cal V}_k}[KL({\cal D}_m, {\cal D}_n)]\\
	=& \text{p}(\mb{y}_m = \mb{y}_n)\cdot \mathbb{E}_{\mb{y}_m = \mb{y}_n}KL({\cal D}_m, {\cal D}_n)\\
	&+ \text{p}(\mb{y}_m \not= \mb{y}_n)\cdot \mathbb{E}_{\mb{y}_m \not= \mb{y}_n}KL({\cal D}_m, {\cal D}_n)\\
	\leq& \text{p}(\mb{y}_i = \mb{y}_j)\cdot \mathbb{E}_{\mb{y}_i = \mb{y}_j}KL({\cal D}_i, {\cal D}_j)\\
	&+ \text{p}(\mb{y}_i \not= \mb{y}_j)\cdot \mathbb{E}_{\mb{y}_i \not= \mb{y}_j}KL({\cal D}_i, {\cal D}_j)\\
	=& \mathbb{E}_{i ,j\in \cal V}[KL({\cal D}_i, {\cal D}_j)]
	\end{split}
	\end{equation}
	\vspace{-5pt}
\end{proof}

From Theorem~\ref{thm:deeper}, the distribution similarity of nodes in the subgraph is higher than that in the full graph. It is easy to understand since each subgraph generated by {\our} contains nodes from the local part. The randomness ensures that different subgraphs could cover other local parts of the full graph for the subgraph mini-batch. Subsequently, with the increasing of $l$, the distribution ${\cal D}^{(l)}$ in each subgraph will converge to different steady states: since each subgraph possesses different nodes and structures. Compared with the global steady state, different steady states correspond to the local information within various subgraphs, which can help improve the learning capacity of deep {\gnn}.

\section{Experiments}\label{sec:experiment}

To show the effectiveness and efficiency of {\ourshort}, extensive experiments have been conducted on real-world datasets. 
We aim to answer the following evaluation questions based on experimental results together with the detailed analysis:
\begin{itemize}
	\item \textbf{Question 1}: Can {\ourshort} break through the memory bottleneck in order to handle \textit{node dependence}?
	\item \textbf{Question 2}: Can {\ourshort} accelerate the training speed on large graphs when facing \textit{neighbors explosion}?
	\item \textbf{Question 3}: Can {\oursampler} provide powerful subgraphs to support effective and efficient training?
	\item \textbf{Question 4}: Can {\ourshort} tackle the \textit{oversmoothing} problem occuring in deep GNNs?
	
\end{itemize}

\subsection{Experiment Settings}
\subsubsection{Datasets}

We test our algorithms on 5 datasets. Three of them are standard citation network benchmark datasets: Cora, Citeseer, and Pubmed~\cite{sen2008collective}. Flickr~\cite{zeng2019graphsaint,mcauley2012image} is built by forming links between images sharing common metadata from Flickr. Edges are formed between images from the same location, submitted to the same gallery, group, or set, images sharing common tags, images taken by friends, etc. For labels, Zeng et al.~\cite{zeng2019graphsaint} scan over the 81 tags of each image and manually merged them into 7 classes. Each image belongs to one of the 7 classes. Reddit~\cite{hamilton2017inductive} is a graph dataset constructed from Reddit posts. In this case, the node label is the community or “subreddit” that a post belongs to.

These datasets involve both the transductive task and inductive task. The transductive task in our experiments is semi-supervised node classification on one graph; the inductive task is the node classification on multiple graphs. The information of them is presented in Table~\ref{tab:dataset}. The label rate in the table means the ratio of training data.
\begin{table}[t]
	\vspace{-20pt}
	\scriptsize 
	\caption{Datasets in Experiments}
	\vspace{-8pt}
	\renewcommand\arraystretch{1}
	\centering
	\begin{threeparttable}
		\begin{tabular}{l c c c c c}
			\toprule[1.5pt]
			\multirow{2}*{}&\multicolumn{3}{c}{Transductive}&\multicolumn{2}{c}{Inductive}\\
			
			\cline{2-6}
			&Cora&Citeseer&Pubmed&Flickr&Reddit\\
			\hline
			\# Nodes&2708&3327&19717&89250&232965\\
			\# Edges&5429&4732&44338&899756&11606919\\
			\# Features&1433&3703&500&500&602\\
			\# classes&7&6&3&7&41\\
			Label rate&0.052&0.036&0.003&0.6&0.6\\
			\bottomrule[1.5pt]
		\end{tabular}
		
	\end{threeparttable}
	\vspace{-15pt}
	\label{tab:dataset}
\end{table}
\subsubsection{GNNs Models for Training}

We have applied the {\ourshort} to train {\gcn} and {\gat} models, respectively, which both are representative and widely used GNN models. The default models contain two layers. The hidden layers involve different sizes based on different datasets: for Cora, Citeseer, and Pubmed, the hidden layers' size is 32; for Flickr and Reddit, the hidden size is 128 for {\gcn} layer and 8 for {\gat} layer. The dropout rate is 0.5 for Cora, Citeseer, Pubmed, and 0.1 for Flickr, Reddit. We employ the {\adam}~\cite{adam} as the optimizer. The learning rate is 0.01, with weight decay as $5\times 10^{-4}$.

\subsubsection{Comparison Methods}
We compare the GNNs trained by {\ourshort} with state-of-the-art baseline methods:

\noindent
\textbf{\textit{Comparison Models}}
\begin{itemize}
	\item \textbf{{\gcn}}~\cite{kipf2016semi}: GCN is a semi-supervised method for the node classification, which operates on the whole graph.
	\item \textbf{{\gat}}~\cite{gat}: GAT is an attention-based graph neural network for the node classification. 
	\item \textbf{GraphSAGE}~\cite{hamilton2017inductive}: GraphSAGE is a general inductive framework that leverages node feature information to generate node embeddings for unseen data efficiently.
	\item \textbf{Cluster-GCN}~\cite{chiang2019cluster}: Cluster-GCN is a suitable framework for SGD-based training. It samples a block of nodes that associate with a dense subgraph identified by a graph clustering algorithm and restricts the neighborhood search within this subgraph.
	
\end{itemize}

\noindent
\textbf{\textit{Comparison Samplers}}
\begin{itemize}
	\item \textbf{{\oursampler}}: {\oursampler} is the sampler proposed in this paper.
	\item \textbf{Random}: The Random sampler randomly selects a certain number from all nodes to form a subgraph for training.
	\item \textbf{BFS}: The BFS sampler performs a breadth-first search from the starting node to select subgraphs.	
\end{itemize}

For the Cluster-GCN, the number of clusters is set to make the average subgraph size the same as {\oursampler}. Meanwhile, the self-comparison is conducted among {\oursampler}, BFS sampling, and Random sampling strategies. For these samplers, the sampled subgraph size on Cora and Citeseer datasets is $S = 1500$, on Pubmed and Flickr $S = 3000$, on Reddit $S = 5000$. For the subgraph mini-batch size $M$, it can be varying for different datasets. Generally, we make the overall sampled nodes at least twice of the nodes in a full graph, which also means $M\times S \simeq 10 |{\cal V}|$.  In the following parts, the ``GCN + Random / BFS / RWT" denotes {\gcn} model training with subgraphs from Random sampler, BFS sampler and {\oursampler}, respectively.


We run the experiments on the Server with 3 GTX-1080 ti GPUs, and all codes are implemented in Python. 
Code is available at:
\href{https://github.com/YuxiangRen/RippleWalk}{https://github.com/YuxiangRen/RippleWalk}.

\subsection{Experimental Results with Analysis}
\begin{table}[t]
	\vspace{-15pt}
	\scriptsize
	\renewcommand\arraystretch{1}
	\centering
	\begin{threeparttable}
		\caption{Test accuracy results on all datasets}
		\vspace{-8pt}
		\begin{tabular}{l c c c c c}
			\toprule[1.5pt]
			\multirow{2}*{\textbf{Methods}}&\multicolumn{3}{c}{Transductive}&\multicolumn{2}{c}{Inductive}\\
			
			\cline{2-6}
			&Cora&Citeseer&Pubmed&Flickr&Reddit\\
			
			\hline
			GraphSAGE&0.7660&0.6750&0.7610&0.4030&0.9300\\ 
			Cluster-GCN&0.682&0.628&0.7947&0.4097&\textbf{0.9523}\\
			\hline
			{\gcn}&0.815&0.7030&0.7890&0.4400&0.9333\\
			\hdashline
			{\gcn} + Random&0.7945&0.687&0.7345&0.4713&0.8243\\
			{\gcn} + BFS&0.8144&0.7079&0.7971&0.4754&0.8123\\
			{\gcn} + RWT&\textbf{0.825}&\textbf{0.7127}&\textbf{0.8259}&\textbf{0.4797}&0.9495\\
			\hline
			{\gat}&\textbf{0.8300}&0.7130&0.7903&-&-\\
			\hdashline
			{\gat} + Random&0.7921&0.6607&0.6765&0.4534&0.6452\\
			{\gat} + BFS&0.7756&0.6500&0.7080&0.4642&0.7297\\
			{\gat} + RWT&0.7994&\textbf{0.7212}&\textbf{0.8210}&\textbf{0.4724}&\textbf{0.8699}\\		
			
			\bottomrule[1.5pt]
		\end{tabular} 
		
		\begin{tablenotes}
			\footnotesize
			\item``-'' insufficient memory.
		\end{tablenotes}
	\end{threeparttable}
	\label{tab:results_all}
	\vspace{-15pt}
\end{table}
\subsubsection{Task Performance Analysis}
The overall task performance of {\ourshort} and comparison methods are exhibited in Table~\ref{tab:results_all}. The most important thing about the training framework {\ourshort} is to ensure the training effect of GNN. On the premise of competitiveness or better task performance, the advantages of time and space can be meaningful. We can first observe that {\gcn} and {\gat} with {\ourshort} outperforms plain {\gcn} and {\gat} in most of the cases. Here, both {\gcn} and {\gat} contain two layers. For the {\gcn} model, {\ourshort} has better overall performance than GraphSAGE and Cluster-GCN; for the {\gat}, even in some cases when training with a full graph cannot be executed due to limited memory space (e.g., on Flickr and Reddit), {\gat} with {\ourshort} can successfully run and achieve high performance. For the self-comparison, {\oursampler} achieves the best results compared with random and BFS sampling. Generally speaking, {\gnn} with {\ourshort} can achieve the same level or even better testing performance compared with other popular baseline methods. Meanwhile, the advantages of {\ourshort} on space complexity are significant. In the following part, we will answer four evaluation questions mentioned before to validate the efficiency of {\ourshort}.
\subsubsection{Space-consuming Analysis}
\begin{table}[t]
	\vspace{-20pt}
	\caption{Memory Space Usage (The unit is MB)}
	\vspace{-8pt}
	\scriptsize
	\renewcommand\arraystretch{1}
	\centering
	\begin{tabular}{l c c c c c}
		\toprule[1.5pt]
		&Cora&Citeseer&Pubmed&Flickr&Reddit\\
		\hline
		{\gcn}&535&605&2057&30392&212003\\
		{\gcn} + RWT&\textbf{509}&\textbf{587}&\textbf{1235}&\textbf{922}&\textbf{1101}\\
		\hdashline
		{\gat}&6921&10277&11868&243089&243089\\
		{\gat} + RWT&\textbf{2121}&\textbf{2469}&\textbf{2629}&\textbf{12000}&\textbf{12080}\\
		\bottomrule[1.5pt]
	\end{tabular}
	\label{tab:space}
	\vspace{-15pt}
\end{table}
One of the critical advantages of {\ourshort} compared with plain {\gnn} training is less space-consuming. To answer \textbf{Question 1}, we first compare the memory space usage and show them in Table~\ref{tab:space}. It is obvious that training {\gnn} with {\ourshort} requires less memory space than plain {\gnn}. Especially for {\gat}, the space usage of {\ourshort} is much less than using the full graph. Therefore, when training plain {\gnn} is too space-consuming to run, {\ourshort} can help conduct the training process of {\gnn} and the performance can be guaranteed. Meanwhile, the less space-consuming of {\ourshort} enables the {\gnn} to be carried out on GPUs, which can further accelerate the training. In this part, we do not compare {\ourshort} with other baseline methods, since for the neighbor node sampling methods such as GraphSAGE~\cite{hamilton2017inductive}, FastGCN~\cite{chen2018fastgcn} or methods in~\cite{chen2017stochastic, huang2018adaptive} do not apply the subgraph training concept. Thus their space complexity will be the same with the plain {\gnn}. Therefore we can conclude that while maintaining the high-level testing performance, {\ourshort} can significantly minimize the space-consuming when running {\gnn}. In this way, {\ourshort} break the memory bottleneck when facing huge graphs. The experimental results also validate our previous theoretic analyses about both effectiveness and efficiency.  
\subsubsection{Time-consuming Analysis}
\begin{table}[h]
	\vspace{-20pt}
	\caption{Training Time (The unit is second)}
	\vspace{-8pt}
	\scriptsize
	\renewcommand\arraystretch{1}
	\centering
	\begin{tabular}{l c c c c c}
		\toprule[1.5pt]
		&Cora&Citeseer&Pubmed&Flickr&Reddit\\
		\hline
		{\gcn}&4.573&1.968&61.90&1161.92&25370\\
		{\gcn} + RWT&\textbf{1.964}&\textbf{1.826}&\textbf{8.698}&\textbf{1.179}&\textbf{7.722}\\
		\hdashline
		{\gat}&413.3&500.1&-&-&-\\
		{\gat} + RWT&\textbf{71.44}&\textbf{47.06}&\textbf{139.4}&\textbf{68.06}&\textbf{2614}\\
		\bottomrule[1.5pt]
	\end{tabular}
	\begin{tablenotes}
		\footnotesize
		\item``-'' insufficient memory.
	\end{tablenotes}
	\vspace{-12pt}
	\label{tab:time}
\end{table}
To answer \textbf{Question 2}, we further record the duration time of the training process and present it in Table~\ref{tab:time}. The running time of {\gcn} on Flickr and Reddit datasets, {\gat} + {\ourshort} on Reddit dataset is based on the CPU server. All other convergence time is recorded on GPUs. By applying {\ourshort}, the running time of {\gnn} models can be reduced by a large margin. Thus the {\ourshort} on {\gnn} not only requires less memory space but also can accelerate the convergence of the training process. Such results can be explained intuitively since each training epoch of {\ourshort} involves less space and computation complexity. While sharing the same computational quantity, {\ourshort} possesses more training iterations than plain {\gnn} training.
\subsubsection{Sampler performance Analysis}
To answer \textbf{Question 3}, we conduct comparison experiments among different samplers. From Table~\ref{tab:results_all}, we can notice that Ripple Walk Sampler outperforms other samplers on all five datasets. These results verify that the subgraphs provided by Ripple Walk Sampler are more beneficial for graph model training. Random sampler leans towards randomness, while BFS sampler focuses on connectivity too much. They can not consider randomness and connectivity simultaneously, yet Ripple Walk Sampler can do so. It also validates our theoretical analysis in Section~\ref{subsec:gnn_with_subgraph}.
\subsubsection{For Deeper Graph Networks}

\begin{figure}[t]
	\vspace{-20pt}
	\centering
	\includegraphics[width=0.3\textwidth, height= 0.17\textheight]{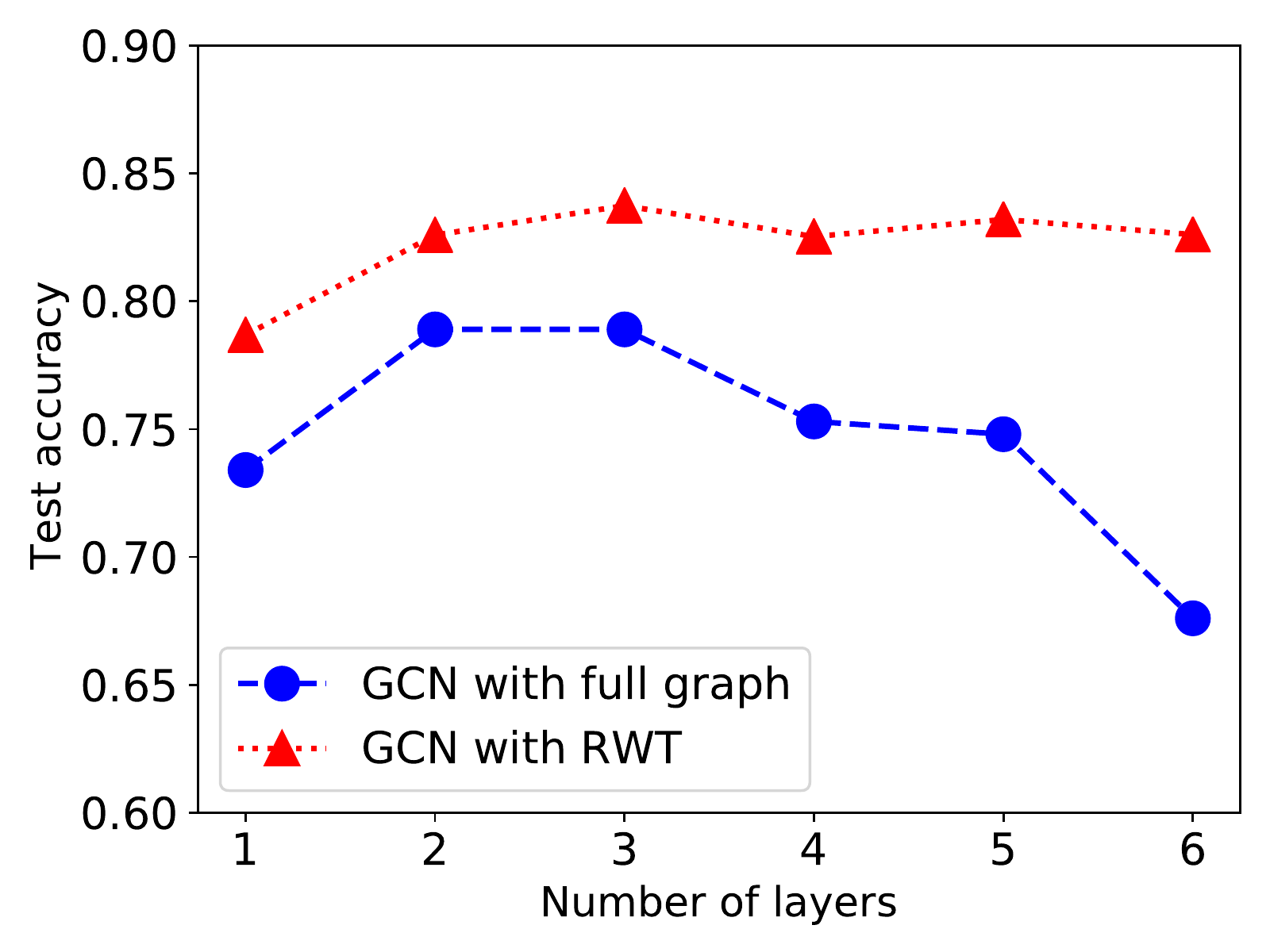}
	\vspace{-10pt}
	\caption{{\gcn} with Deeper Layers}\label{fig:deeper}
\end{figure}

We test {\gcn} models with different numbers of layers to answer \textbf{Question 4}. The results are shown in Figure~\ref{fig:deeper}. We show the results on the Pubmed dataset, and the experimental results are consistent in other datasets. We can observe that {\gcn} with {\ourshort} achieves better performance than plain {\gcn} on the test loss and accuracy. Besides, with the structure goes deeper, even when the performance of {\gcn} decreases, {\gcn} with {\ourshort} achieves higher performance. Thus with the support of {\ourshort}, the problem of \textit{oversmoothing} can be eliminated, and {\gnn} models can be designed with deeper structure. From such phenomenon, deeper {\gnn} models such as~\cite{gresnet} can benefit from the {\ourshort}.

\begin{figure}[t]
	\vspace{-15pt}
	\centering
	\includegraphics[width=0.3\textwidth, height= 0.15\textheight]{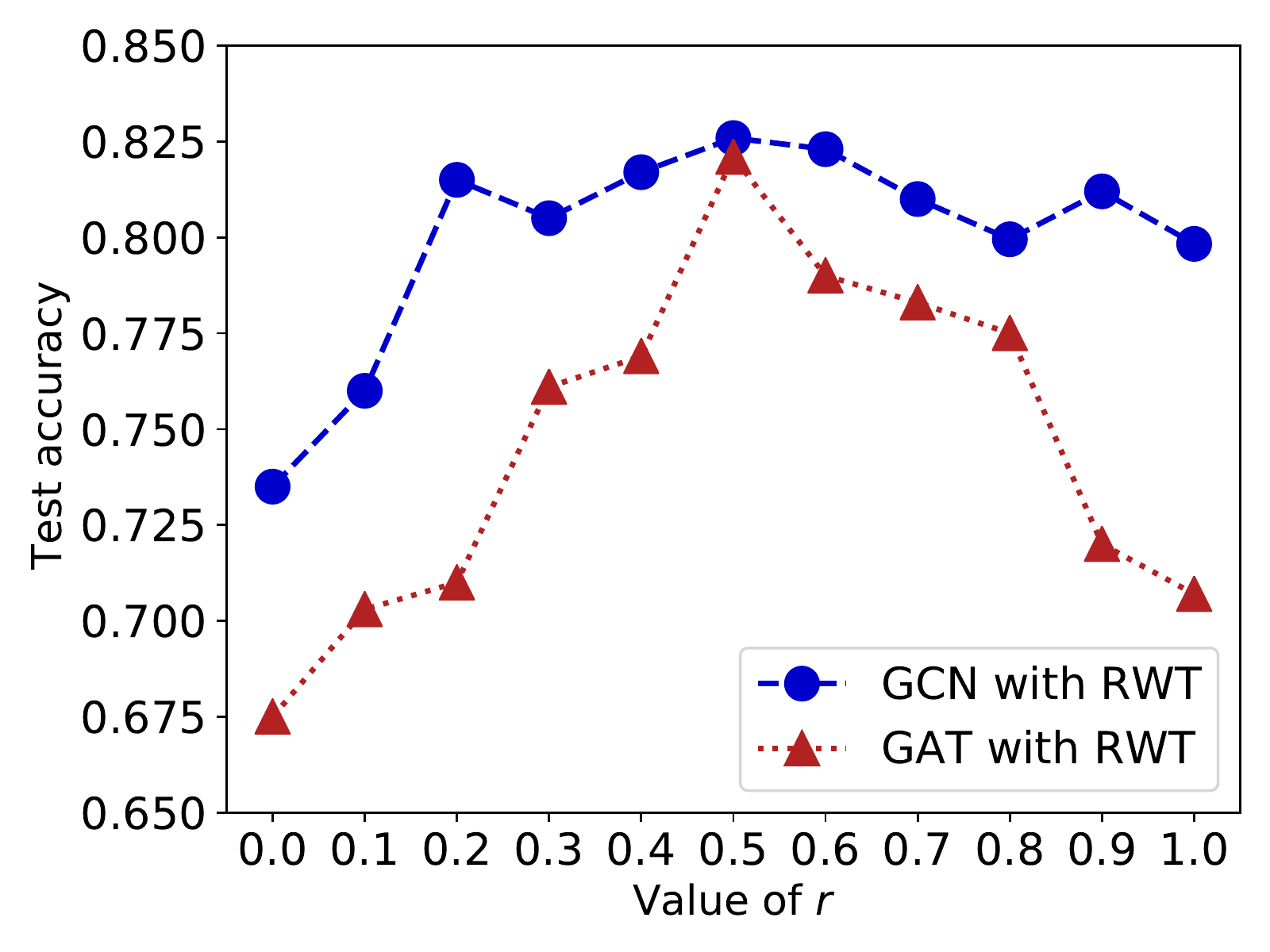}
	\vspace{-10pt}
	\caption{{\gnn} with Different $r$}\label{fig:ratio}
	\vspace{-15pt}
\end{figure}
\vspace{-5pt}
\subsection{Parameter Analysis}

\subsubsection{Expansion Ratio $r$ Analysis}\label{sec:parameter_r}
To verify the analysis of $r$ in Subsection~\ref{subsec:ripple_walk}, we implement experiments of {\oursampler} with different expansion ratios, and present the results in Figure~\ref{fig:ratio}. We show the results on Pubmed, and the experimental results are consistent in all datasets. According to previous analysis in  Subsection~\ref{subsec:ripple_walk}, $r\rightarrow 0$ or $r\rightarrow 1$ do not help maintain the randomness and connectivity characteristics in subgraphs. From the results we can observe that when $r = 0.5$, {\ourshort} achieve the best performance and the performance decreases when $r\rightarrow 1$ or $r\rightarrow 0$. Therefore, the results verify our previous analysis, The subgraphs sampled by {\oursampler} consider both randomness and connectivity, which are beneficial to subgraph-based training for GNNs.

\subsubsection{Subgraph Mini-batch Size $M$ Analysis}

As mentioned above, we have applied the subgraph mini-batch size to satisfy the $M\times S \simeq 10 |{\cal V}|$. Give a fixed subgraph size $S$, we also  attempt different values of $M$ and present the results on Pubmed dataset with $S = 3000$ and $M\in \{10, 20, \dots, 100\}$ in Figure~\ref{fig:mini-batch_size}.   Both GCN and GAT with RWT have relatively worse performance when $M$ is small (e.g., $M\in \{10,20,30\}$). With the increase of $M$, the performance becomes better and finally maintains at a high level.  It is intuitive that with a smaller subgraph mini-batch size $M$, less information of the full graph can be covered. While $M$ rising, more subgraphs combinations will be selected to draw the information (e.g., connections) within the full graph. On the other hand, along with increasing $M$, subgraphs with similar nodes and edges could be sampled. Such similar subgraphs may cause redundant information extraction from the full graph, and the performance will finally converge instead of improving continuously.
\begin{figure}[t]
	\vspace{-5pt}
	\centering
	\includegraphics[width=0.3\textwidth, height= 0.17\textheight]{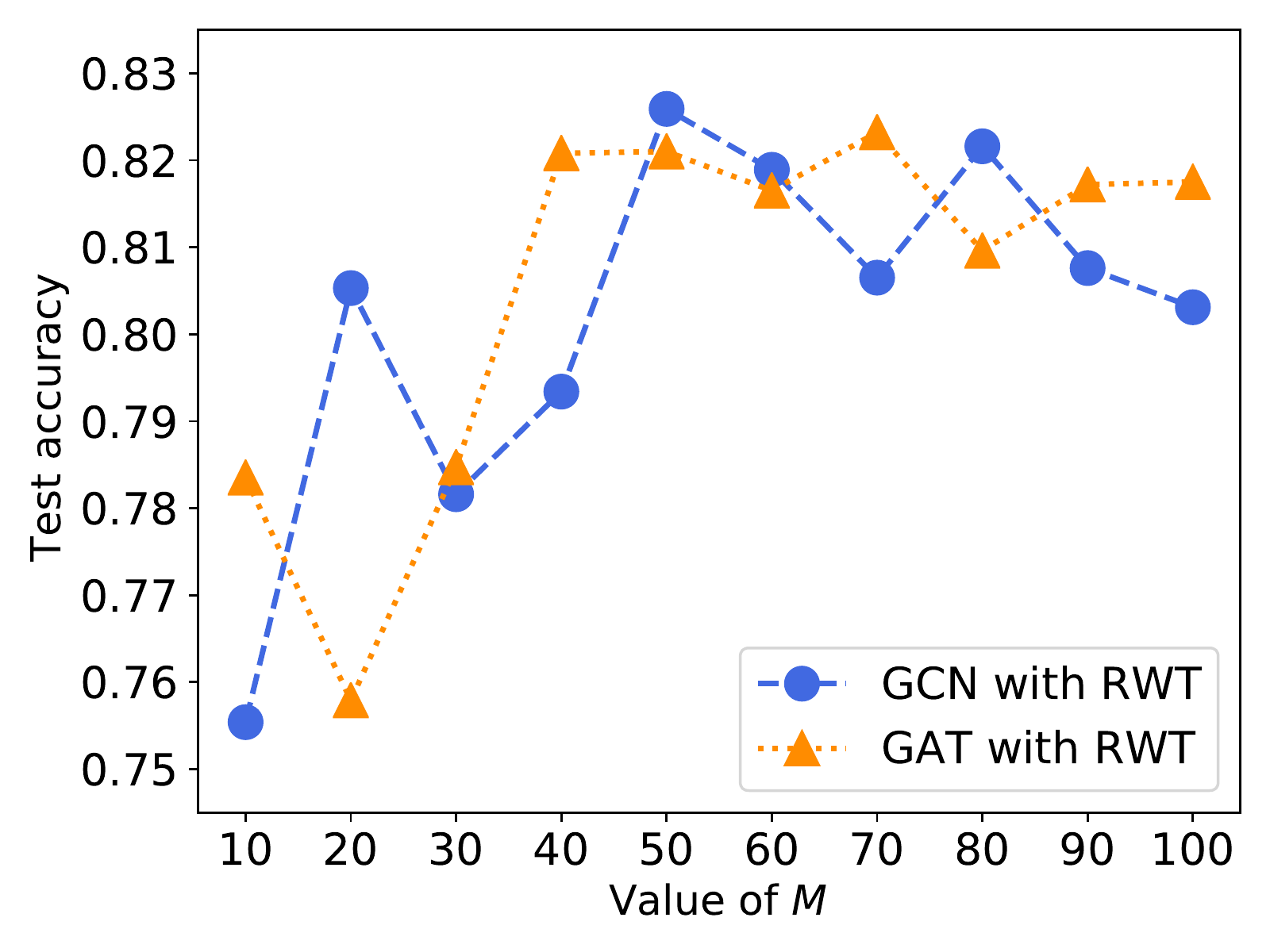}
	\vspace{-10pt}
	\caption{{\gnn} with Different values of $M$}\label{fig:mini-batch_size}
	\vspace{-20pt}
\end{figure}

\vspace{-5pt}
\section{Conclusion}\label{sec:conclusion}
In this paper, we have introduced a subgraph-based training framework {\ourshort} for {\gnn}, which combines the idea of training {\gnn} with mini-batch subgraphs and a novel subgraph sampling method Ripple Walk Sampler. We analyze the effectiveness and efficiency of the {\ourshort} and prove it from the theoretical perspective. Extensive experiments demonstrate that {\ourshort} achieves the same level or even better performance, but less training time and device memory space are required. At the same time, {\ourshort} can help relieve the problem of \textit{oversmoothing} when models go deeper, enabling the GNNs to have stronger learning power and potential.
\vspace{-5pt}

\section{Acknowledgement}\label{sec:ack}
\vspace{-5pt}
This work is partially supported by NSF through grant IIS-1763365.


\bibliographystyle{plain}
\bibliography{reference}

\end{document}